\newcommand{\methodname}{{\tt{FedCompetitors}}}
\newcommand{\rmnum}[1]{\romannumeral #1}
\newcommand{\Rmnum}[1]{\expandafter\@slowromancap\romannumeral #1@}
\newcolumntype{C}[1]{>{\centering\arraybackslash}m{#1}}
\newtheorem{assump}{\textbf{Assumption}}
\newtheorem{proposition}{\textbf{Proposition}}
\newtheorem{lemma}{\textbf{Lemma}}
\title{\methodname: Harmonious Collaboration in Federated Learning with Competing Participants}
\author{
    Shanli~Tan\textsuperscript{\rm 1}\equalcontrib, Hao~Cheng\textsuperscript{\rm 2}\equalcontrib, Xiaohu~Wu\textsuperscript{\rm 1}\equalcontrib$^{\bigstar}$, Han~Yu\textsuperscript{\rm 3}\equalcontrib, Tiantian~He\textsuperscript{\rm 4}$^{\bigstar}$, Yew-Soon~Ong\textsuperscript{\rm 3,4},\\ Chongjun Wang\textsuperscript{\rm 2}, Xiaofeng~Tao\textsuperscript{\rm 1}
}
\begin{document}

\SetKwInOut{Begin}{Begin}
\SetKwInOut{Input}{Input}
\SetKwInOut{Output}{Output}
\SetKw{Continue}{continue}
\SetKw{Break}{break}

\maketitle

\begin{abstract}
Federated learning (FL) provides a privacy-preserving approach for collaborative training of machine learning models. Given the potential data heterogeneity, it is crucial to select appropriate collaborators for each FL participant (FL-PT) based on data complementarity. Recent studies have addressed this challenge. Similarly, it is imperative to consider the inter-individual relationships among FL-PTs where some FL-PTs engage in competition. Although FL literature has acknowledged the significance of this scenario, practical methods for establishing FL ecosystems remain largely unexplored. In this paper, we extend a principle from the balance theory, namely ``the friend of my enemy is my enemy'', to ensure the absence of conflicting interests within an FL ecosystem. The extended principle and the resulting problem are formulated via graph theory and integer linear programming. A polynomial-time algorithm is proposed to determine the collaborators of each FL-PT. The solution guarantees high scalability, allowing even competing FL-PTs to smoothly join the ecosystem without conflict of interest. The proposed framework jointly considers competition and data heterogeneity. Extensive experiments on real-world and synthetic data demonstrate its efficacy compared to five alternative approaches, and its ability to establish efficient collaboration networks among FL-PTs.
\end{abstract}

\section{Introduction}
\label{sec.introduction}
Federated Learning (FL) represents a paradigm within distributed machine learning (ML) that facilitates the collaborative training of ML models by leveraging data from multiple parties while upholding privacy considerations \cite{Yang19a}. Each participant in FL (referred to as FL-PT) acts as a custodian of data and directly employs its dataset to locally train a model. In the well-established Federated Averaging (FedAvg) framework \cite{McMahan17a}, a central server (CS) periodically gathers model updates from individual FL-PTs, which are then aggregated to refine a global model. Similarly, each FL-PT regularly acquires the latest global model from the CS and further enhances it through local training. This iterative interplay between the CS and FL-PTs persists until the global model achieves convergence. FL has demonstrated significant promise across diverse domains, 
including healthcare, digital banking, ridesharing, recommender systems, and drug discovery \cite{sheller2020federated,long2020federated,Yang2020,10.1145/3534678.3539047,oldenhof2023industry,sun2023federated}.%

For example, consider a clinical research network of multiple hospitals \cite{Fleurence14a}. These hospitals possess the capacity to collaboratively construct ML models. In an optimal setting, the global model derived from FL should outperform models crafted by individual FL-PTs. However, a potential complication arises from the non-independent and non-identically distributed (Non-IID) nature of data across these FL-PTs \cite{Jin-Non-IID}. Each FL-PT undertakes local model training, which might lead it to a distinct local optima, diverging from the global optima. Consequently, the model performance of an FL-PT might experience degradation due to the FL process \cite{Wang19a}. The diversity in data characteristics among FL-PTs can be graphically portrayed using a directed benefit graph denoted as $\mathcal{G}_{b}$ \cite{Cui22a}. In this graphical representation, an edge from FL-PT $v_{i}$ to $v_{j}$ signifies that the data from $v_{i}$ can potentially enhance the learning outcomes of $v_{j}$ through the FL process.

Besides data heterogeneity, another important factor is the relationships among FL-PTs. For instance, in the context of hospitals located in different cities, they serve distinct populations. As depicted in Figure \ref{Fig-hospitals-relationship}, the hospital in city $C$ solely focuses on improving its own ML model, and its utility is independent of any FL-PT in other cities. Such two FL-PTs are considered ``independent", where the shared global model in FL functions as a public good, similar to a radio signal where each individual only values the received signal quality \cite{Tang21a}. In contrast, hospitals within the same city (e.g., city $B$) serve the same population, which can include both public and private hospitals. Then, competition arises where the utility of an FL-PT also depends on the model performance of its competitor \cite{brekke2011hospital}. Such FL-PTs are considered ``competitive". The inter-individual relationship between any two FL-PTs can be represented by an undirected graph $\mathcal{G}_{c}$. 

\begin{figure}[t]
\footnotesize
\centering
\includegraphics[width=0.7\columnwidth]{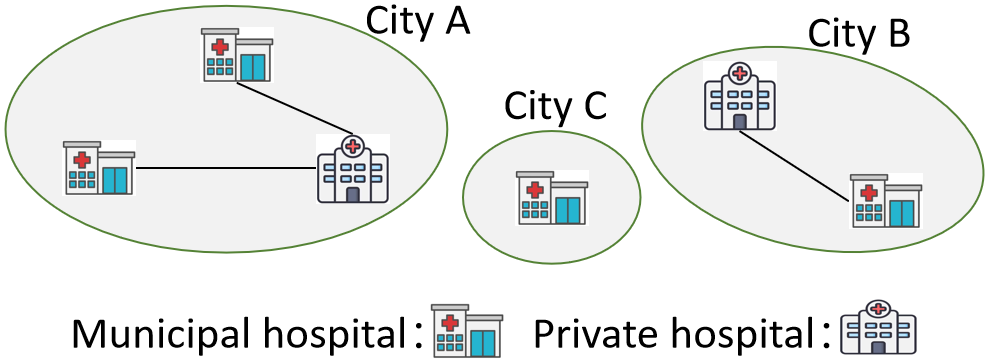}
\caption{Illustration of the Relationships among Hospitals: the black line denotes the competing relationship between two hospitals.}
\label{Fig-hospitals-relationship}
\end{figure}

In the presence of both data heterogeneity and competition, selecting suitable collaborators for each FL-PT is a crucial challenge. Recently, \citet{Cui22a} consider the data heterogeneity case (i.e., the edge set of $\mathcal{G}_{b}$ is non-empty and the edge set of $\mathcal{G}_{c}$ is empty) and leverages the concept of core-stable coalition from cooperative games to effectively address this. All FL-PTs are partitioned into disjoint groups/coalitions. Let $\pi(i)$ denote the coalition to which $v_{i}$ belongs where $\pi$ is called a coalition structure, and $v_{i}$'s utility depends on the FL-PTs in $\pi(i)$. For a core-stable coalition structure $\pi$, there is no other coalition $\mathcal{C}$ such that every FL-PT $v_i$ in $\mathcal{C}$ prefers $\mathcal{C}$ over $\pi(i)$ \cite{aziz_savani_moulin_2016}. Nevertheless, there is no existing work addressing the issue of competition among a part of FL-PTs when establishing collaborations in FL ecosystems. 

In this paper, we propose the \methodname{} approach to bridge this gap. It is general in the sense that (\rmnum{1}) the edge set of $\mathcal{G}_{c}$ is empty or non-empty except the complete graph case and (\rmnum{2}) the edge set of $\mathcal{G}_{b}$ is non-empty. The presence of competing FL-PTs has been recognized as an important aspect in the FL literature \cite{10.1561/2200000083,Zhan22a,ShiTNNLS23}. In balance theory, a principle, namely ``the friend of my enemy is my enemy", can avoid conflict of interest \cite{Jure10a,Dorwin56a}. We apply its extended version to establish collaboration among FL-PTs. Specifically, suppose $v_{i}$ and $v_{k}$ compete, and $v_{j}$ is the friend of $v_{i}$ (i.e., $v_{i}$ benefits from the data of $v_{j}$ in FL training). The FL-PT $v_{i}$, its friend $v_{j}$, and other FL-PTs who benefit $v_{i}$ and $v_{j}$ are in an alliance. Then, the CS regulates that $v_{k}$ will not make a contribution to any FL-PT in the alliance, which ensures that no FL-PTs directly or indirectly assist their competitors. 
If two FL-PTs can collaborate together, they are independent of each other. In a group of independent FL-PTs, an FL-PT can freely collaborate with other FL-PTs in the group, thereby maximizing the social welfare of the entire FL ecosystem.

The extended principle and the resulting problem above can be formulated via graph theory and integer linear programming. We further propose a polynomial-time algorithm that is to determine the collaborators of each FL-PT. Using the proposed solution, even competing FL-PTs can seamlessly join without conflict of interest and the FL ecosystem thus exhibits a high level of scalability and is trusted by FL-PTs with conflicting interests \cite{tariq2023trustworthy,yu2014reputation}. 
Extensive experiments on both synthetic and real-world datasets demonstrate the effectiveness of \methodname{} over the state of the art.


\section{Related Work}
\label{sec.related-work}
We focus on the context of cross-silo FL, where FL-PTs are typically companies or organizations and they both contribute their data and utilize the trained ML models. In the existing research, two scenarios have been extensively investigated: 
(\rmnum{1}) any two FL-PTs in the FL ecosystem are independent of each other and an FL-PT solely focuses on improving its own model performance, without considering the potential competition, and (\rmnum{2}) any two FL-PTs in the FL ecosystem compete against each other where $\mathcal{G}_{c}$ is a complete graph. 
In this paper, we mainly consider the scenario where there exists competition among a part of FL-PTs and an FL-PT will not collaborate with its competitors and other FL-PTs with potential conflict of interest.


{\em Firstly}, in the independent scenario, prior studies focus on alleviating the side effect of data heterogeneity. 
While applying Hedonic games that are a type of cooperative games \cite{aziz_savani_moulin_2016}, stable coalition structures are sought to establish collaboration among FL-PTs. 
\citet{Kleinberg21a} provide an analytical understanding of what partition of FL-PTs leads to a stable coalition structure for mean estimation and linear regression. \citet{chaudhury2022fairness} treat all FL-PTs as a grand coalition and optimizes a common model for all FL-PTs, which is considered core-stable if there is no other coalition $\mathcal{S}$ of FL-PTs that could significantly benefit by training a model with only their data. 
Another way that learns personalized models for FL-PTs works as follows \cite{tan2022towards}: (\rmnum{1}) use the CS to train a global model, and (\rmnum{2}) adapt the model to the local data of FL-PTs. Several approaches, such as meta-learning, and multi-task learning, have been employed for personalization \cite{fallah2020personalized,smith2017federated}. 
\citet{ding2022collaborative} study the case when the FL ecosystem expands to have numerous independent FL-PTs. A group of FL-PTs that has similar contributors is a group of collaboration partners. The authors propose to partition all FL-PTs into $K$ groups and adaptively learn a small number $K$ of models for $n$ FL-PTs, where $1\ll K\ll n$. 

{\em Secondly}, in the competition scenario, all FL-PTs are assumed to offer the same service in a given market. \citet{Wu22a} aim to achieve the objective of maintaining a negligible change in market share after FL-PTs join the FL ecosystem \cite{Farris10a,Wu23a}, and analyze the achievability of this objective. Afterwards, two other works study the profitablity of FL-PTs in the given market after FL-PTs join the FL ecosystem, but are taken under different assumptions on the source of extra profit brought by FL. Specifically, \citet{tsoy2023strategic} use the following assumption: (\rmnum{1}) each consumer has a fixed budget that is allocated to multiple services from different markets, and (\rmnum{2}) if an FL-PT has a higher model quality, its service quality is higher and the consumer will allocate more of its budget to consume the service. 
\citet{huang2023duopoly} consider duopoly business competition between two FL-PTs and assume that, if the model-related service can be improved by FL, customers will have willingness to pay more and FL-PTs thus have opportunities to increase their profits. 

\section{Model and Assumptions}

We use graph theory to describe our model of interest and mathematically formulate the extended principle. 
Specifically, let us consider a set of $n$ FL-PTs denoted by $\mathcal{V}=\{v_{1},$ $v_{2}, \cdots, v_{n}\}$. Each FL-PT $v_{i}$ possesses a local dataset $\mathcal{D}_{i}$. The FL-PTs contemplate joining a collaborative FL network, facilitated by the CS. However, challenges such as data heterogeneity and competition arise among the FL-PTs. To characterize the various relationships among the FL-PTs, three graphs are employed.



\textbf{Competing graph $\mathcal{G}_{c}$.} 
An undirected graph $\mathcal{G}_{c}=(\mathcal{V}, E_{c})$ is used to represent the competing relations between any two FL-PTs, where $\mathcal{V}$ is the set of nodes/FL-PTs and $E_{c}$ is the set of edges. An edge $(v_{i}, v_{j})\in E_{c}$ signifies a competitive relationship between FL-PTs $v_{i}$ and $v_{j}$. The adjacency matrix of $\mathcal{G}_{c}$ is denoted as $S_{n\times n}$: its main diagonal elements are set to zero, i.e., $s_{i,i}=0$; when $i\neq j$, $s_{i,j}=1$ if $v_i$ competes with $v_{j}$, and $s_{i,j}=0$ if $v_i$ is independent of $v_{j}$. Each FL-PT $v_{i}$ will report its competitors to CS, as it hopes that CS will correctly utilize this information to prevent its competitors from benefiting from its data. Thus, CS has the knowledge of $\mathcal{G}_{c}$.


\textbf{Benefit graph $\mathcal{G}_{b}$.} 
A benefit graph is employed to depict the impact of sample distribution discrepancies among the $n$ FL-PTs. For any two FL-PTs $v_i$ and $v_j$, if $w_{j,i}=0$, it indicates that $v_i$ cannot benefit from the data of $v_j$. Conversely, if $w_{j,i}>0$, it implies that $v_i$ can benefit from $v_j$'s data, with larger values of $w_{j,i}$ signifying greater benefit to $v_i$. These values $w_{j,i}$ define a directed graph denoted as $\mathcal{G}_{b}=(\mathcal{V}, E_{b})$, referred to as the benefit graph: $(v_{j}, v_{i})\in E_{b}$ if and only if $i\neq j$ and $w_{j,i}>0$. 
The adjacency matrix of $\mathcal{G}_{b}$ is denoted as $W_{n\times n}$, where the $i$-th column comprises the weights $w_{1,i}, w_{2,i}, \cdots, w_{n,i}$, representing the importance of the $n$ FL-PTs to $v_{i}$. The level of potential (LoP) of an FL-PT $v_{i}$ contributing to the other FL-PTs $\mathcal{V}-\{v_{i}\}$ is defined as
\begin{align}\label{equa-contribution-to-others}
w_{i} = \sum\nolimits_{j\neq i}{w_{i,j}},
\end{align}
which measures the importance of $v_{i}$ to the FL ecosystem.  
The graph $\mathcal{G}_{b}$ can be obtained by the hypernetwork technique in \cite{Cui22a,navon2021learning}.



%


\textbf{Data usage graph $\mathcal{G}_{u}$.} 
Although $v_{i}$ may benefit from $v_{j}$'s data ($w_{j,i}>0$), CS has the authority to determine whether $v_{i}$ can actually utilize $v_{j}$'s local model update information (i.e., indirectly use $v_{j}$'s data) in the FL training process or not. Let $X=(x_{j,i})$ be a $n\times n$ matrix where 
\begin{align}\label{equa-decision-variables}
x_{j,i}\in \{0,1\}
\end{align}
is a decision variable: for two different FL-PTs $v_{i}$ and $v_{j}$, $x_{j,i}$ is set to one if $v_j$ will contribute to $v_i$ (i.e., $v_i$ will utilize $v_{j}$'s local model update information) in the FL training process and $x_{j,i}$ is set to zero otherwise. $X$ defines a directed graph $\mathcal{G}_{u}=(\mathcal{V}, E_{u})$, called the data usage graph: $(v_{j}, v_{i})\in E_{u}$ if and only if $j\neq i$ and $x_{j,i}=1$; then, $v_j$ is said to be a collaborator or friend of $v_i$. Consider any pair of FL-PTs $v_i$ and $v_j$. If $v_j$'s data cannot benefit $v_i$ ($w_{j,i}=0$), we set $x_{j,i}=0$. 
Only when $v_j$'s data can benefit $v_i$, there is a possibility that $x_{j,i}=1$. Consequently, $E_{u}$ is a subset of $E_{b}$, leading directly to the following conclusion.

\begin{lemma}\label{lemma-unreachability}
For any two nodes $v_{j}$ and $v_{i}$, if there is no path from $v_{j}$ to $v_{i}$ in the benefit graph $\mathcal{G}_{b}$, then this also holds in the data usage graph $\mathcal{G}_{u}$.   
\end{lemma}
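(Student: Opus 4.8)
The plan is to reduce the statement to the single structural fact already isolated in the text, namely that $E_u \subseteq E_b$: because $x_{j,i}$ can equal one only when $w_{j,i} > 0$, every edge of the data usage graph is also an edge of the benefit graph. Hence $\mathcal{G}_u$ is a spanning subgraph of $\mathcal{G}_b$ (the two graphs share the vertex set $\mathcal{V}$), and reachability can only shrink when one passes from the larger edge set to the smaller one. The cleanest route is therefore to argue the contrapositive: I would show that any directed path present in $\mathcal{G}_u$ is automatically a directed path in $\mathcal{G}_b$, so that a path from $v_j$ to $v_i$ in $\mathcal{G}_u$ would force one in $\mathcal{G}_b$; negating both sides then yields exactly the claimed implication.

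Concretely, first I would suppose, for the contrapositive, that there exists a directed path $v_j = u_0, u_1, \ldots, u_\ell = v_i$ in $\mathcal{G}_u$, where each consecutive pair satisfies $(u_t, u_{t+1}) \in E_u$. Second, I would invoke $E_u \subseteq E_b$ edge by edge: each $(u_t, u_{t+1})$ lies in $E_b$ as well, so the same vertex sequence $u_0, u_1, \ldots, u_\ell$ is a valid directed path in $\mathcal{G}_b$. Third, I would conclude that this path witnesses reachability of $v_i$ from $v_j$ in $\mathcal{G}_b$, contradicting the hypothesis that no such path exists; therefore no path from $v_j$ to $v_i$ can exist in $\mathcal{G}_u$ either, which is the assertion of the lemma.

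I do not expect any genuine obstacle here, since the result is an immediate monotonicity property of reachability under edge-subset inclusion. The only point deserving care is the justification of $E_u \subseteq E_b$ itself, which rests on the modeling convention that $x_{j,i} = 0$ whenever $w_{j,i} = 0$; once this is granted, the path-lifting argument is purely combinatorial and needs no further assumptions. A one-line alternative would be to observe that the transitive closure is monotone in the edge set, so the containment $\mathcal{G}_u \subseteq \mathcal{G}_b$ directly transfers to the associated reachability relations, but I prefer the explicit path argument because it keeps the proof fully self-contained.
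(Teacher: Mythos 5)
Your proposal is correct and follows exactly the route the paper intends: the paper states the lemma as an immediate consequence of $E_u \subseteq E_b$ (which it derives from the convention that $x_{j,i}=1$ is possible only when $w_{j,i}>0$), and your contrapositive path-lifting argument simply makes that monotonicity-of-reachability step explicit. No gaps; your version is just a more detailed write-up of the same one-line observation.
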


\subsection{Principle for avoiding conflict of interest}

Below, we extend the principle that ``the friend of my enemy is my enemy”. 

\begin{assump}\label{assump-data-usage-constraint-1}
For any two competing FL-PTs $v_{i}$ and $v_{j}$ (i.e., $(v_{i},v_{j})\in E_{c}$), $v_{j}$ is unreachable to $v_{i}$ in the data usage graph $\mathcal{G}_{u}$. 
\end{assump}

\begin{figure}[t!]
\centering
\includegraphics[width=0.83\columnwidth]{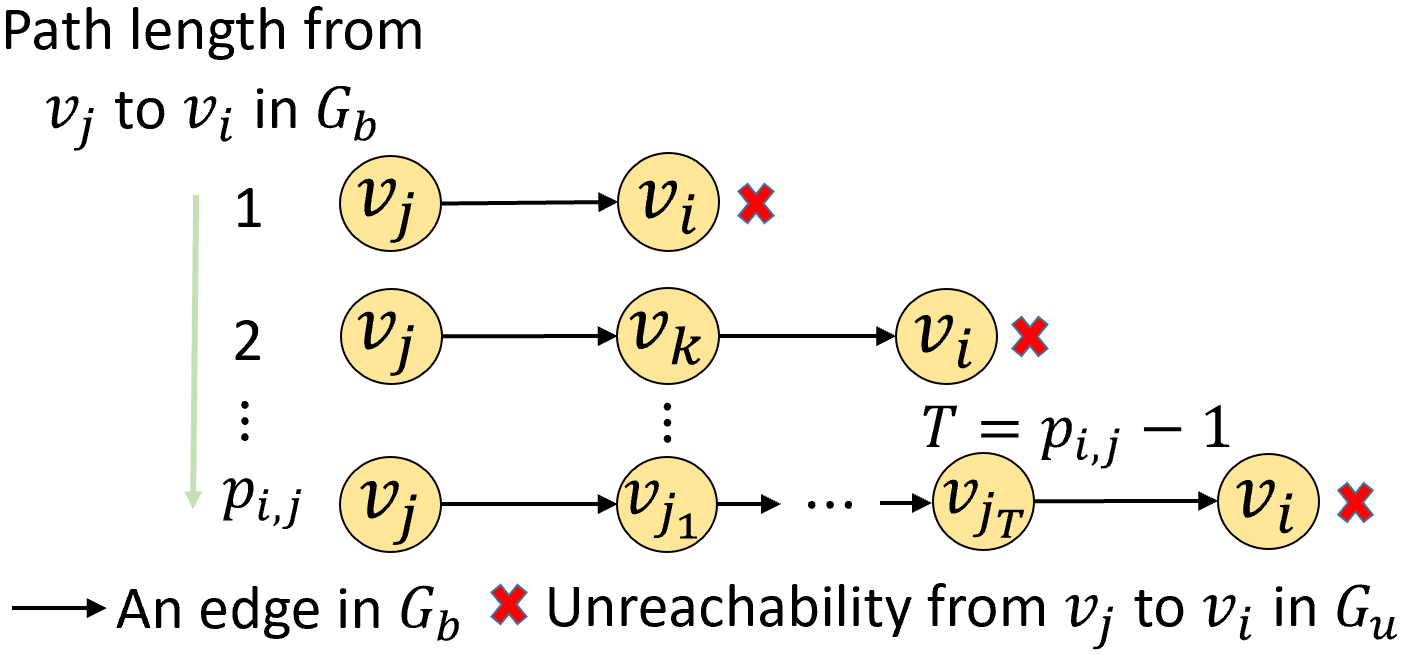}
\caption{Illustration of Assumption \ref{assump-data-usage-constraint-1}: $v_{j}$ is reachable to $v_{i}$ in $\mathcal{G}_{b}$, while $v_{i}$ and $v_{j}$ compete against each other.}
\label{Fig.priciple}
\end{figure}


Assumption \ref{assump-data-usage-constraint-1} is implemented while establishing the collaboration relationships among FL-PTs. Suppose there is a path from $v_{j}$ to $v_{i}$ in the benefit graph $\mathcal{G}_{b}$ whose length is $p_{i,j}$. We use Figure \ref{Fig.priciple} to explain the implication of Assumption \ref{assump-data-usage-constraint-1}. If $p_{i,j}=1$, it posits that one FL-PT refuses to contribute to its competitor. If $p_{i,j}=2$, we use $v_{k}$ to denote the intermediate node between $v_{j}$ and $v_{i}$. 
If $v_{i}$ benefits from $v_{k}$, $v_{k}$ is $v_{i}$'s friend; $v_{j}$ is not willing to see the enhancement of $v_{i}$'s model and will threaten not to contribute to $v_{k}$. 
Assumption \ref{assump-data-usage-constraint-1} posits that, if $(v_{k}, v_{i})\in E_{u}$, then $(v_{j}, v_{k})\notin E_{u}$, i.e., $v_{j}$ doesn't help the friend $v_{k}$ of its enemy $v_{i}$. Generally, for any $p_{i,j}$, 
the path from $v_{j}$ to $v_{i}$ in $\mathcal{G}_{b}$ is denoted as 
\begin{align}\label{def-P-j-i}
P_{j}^{i} = (v_{j_{0}}, v_{j_{1}}, \cdots, v_{j_{p_{i,j}}}), 
\end{align}
where $j_{0}=j$ and $j_{p_{i,j}}=i$. 
If any, let $t$ be the minimum integer in $[1, p_{i,j}-1]$ such that $(v_{j_{l}}, v_{j_{l+1}})\in E_{u}$ for every $l\in [t, p_{i,j}-1]$ where $v_{j_{l}}$ helps $v_{j_{l+1}}$. Then, FL-PTs $v_{j_{t}},$ $v_{j_{t+1}},$ $\cdots,$ $v_{j_{p_{i,j}}}$ are said to be in an alliance, and $v_{j}$ will not help any member in this alliance. 
Assumption \ref{assump-data-usage-constraint-1} follows a common logic in reality that nobody wants to see others help its enemy and its enemy's friends. By applying Assumption \ref{assump-data-usage-constraint-1}, it is strictly guaranteed that each FL-PT will not make a contribution to its competitors directly or indirectly.

\begin{figure*}[t]
\begin{minipage}[b]{0.496\textwidth}
\begin{center}
\subfigure[$v_{i}$ is reachable to the red node in the oval, which is also the competitor of the blue nodes.]{%
  \includegraphics[width=0.73\textwidth]{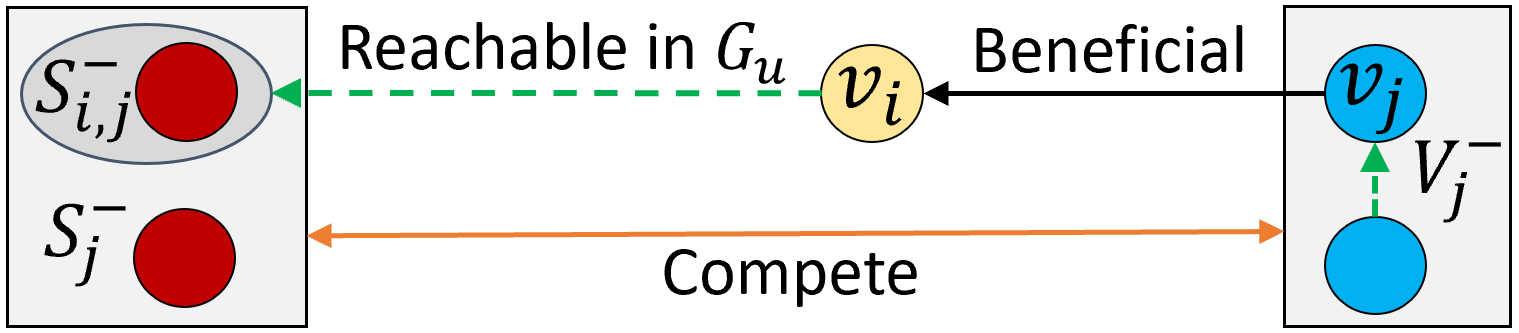}%
}%
\end{center}
\end{minipage}
\begin{minipage}[b]{0.496\textwidth}
\begin{center}
\subfigure[$v_{j}$ is reachable from the red node in the oval, which is also the competitor of the golden nodes.]{%
  \includegraphics[width=0.73\textwidth]{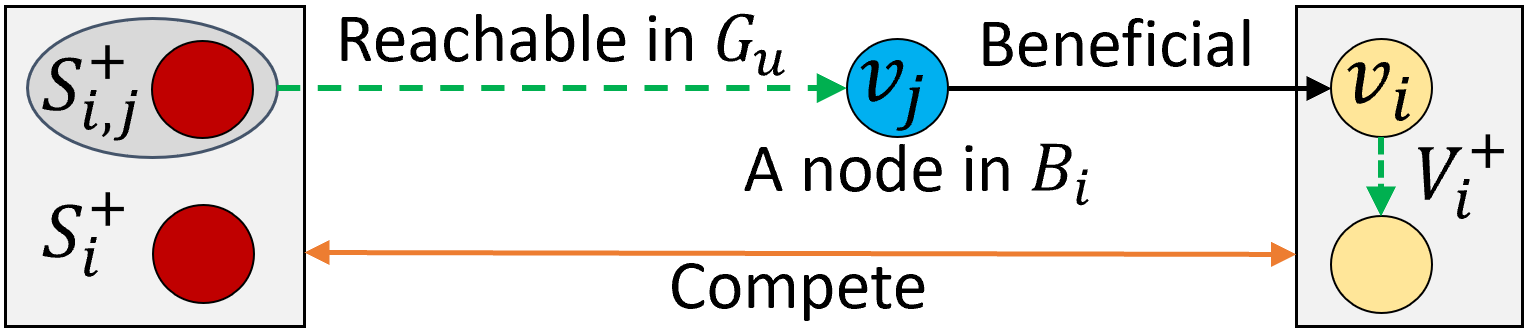}%
}
\end{center}
\end{minipage}
\caption{Effect on Assumption \ref{assump-data-usage-constraint-1} after adding an edge $(v_{j}, v_{i})$ in the data usage graph $\mathcal{G}_{u}$.}
\label{Fig-factors-collaborators-selection}
\end{figure*}

For any competing FL-PTs $v_{i}$ and $v_{j}$, let $\mathcal{P}_{j,i}$ denote the set of all reachable paths from $v_{j}$ to $v_{i}$ in the graph $\mathcal{G}_{b}$. Assumption \ref{assump-data-usage-constraint-1} can be characterized by $\mathcal{G}_{c}$, $\mathcal{G}_{b}$, and $\mathcal{G}_{u}$. 

\begin{proposition}\label{proposi-equivalent}
Assumption \ref{assump-data-usage-constraint-1} holds if and only if the following condition is satisfied:
\begin{align}
x_{j,j_{1}}  + x_{j_{1},j_{2}} + & \cdots +  x_{j_{p_{i,j}},i} \leqslant p_{i,j}-1,\label{equa-constraint-assump-2}\\ & \forall (v_{i}, v_{j})\in E_{c},\, \forall P_{j}^{i}\in \mathcal{P}_{j,i}. \nonumber
\end{align}
\end{proposition}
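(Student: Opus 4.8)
The plan is to establish the biconditional by translating the combinatorial notion of unreachability in $\mathcal{G}_{u}$ into the arithmetic of the binary variables $x_{j,i}$, and then to prove the two implications separately. The pivotal observation is that a fixed path $P_{j}^{i}=(v_{j_{0}},\ldots,v_{j_{p_{i,j}}})$ of the benefit graph coincides with a directed path of the data usage graph $\mathcal{G}_{u}$ if and only if every one of its $p_{i,j}$ edges lies in $E_{u}$, i.e.\ $x_{j_{l},j_{l+1}}=1$ for all $l$. Since each $x_{j_{l},j_{l+1}}\in\{0,1\}$, the sum over the path edges ranges in $\{0,1,\ldots,p_{i,j}\}$ and attains its maximum $p_{i,j}$ precisely in this all-ones case. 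Hence the inequality \eqref{equa-constraint-assump-2}, which caps the sum at $p_{i,j}-1$, is logically equivalent to the statement that $P_{j}^{i}$ is not realized as a path in $\mathcal{G}_{u}$.

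With this translation in hand, the backward direction $(\Leftarrow)$ would proceed by contraposition: if Assumption \ref{assump-data-usage-constraint-1} failed, some competing pair $(v_{i},v_{j})\in E_{c}$ would admit a directed path from $v_{j}$ to $v_{i}$ in $\mathcal{G}_{u}$. Because $E_{u}\subseteq E_{b}$, Lemma \ref{lemma-unreachability} guarantees that this path is also a path of $\mathcal{G}_{b}$, hence a member of $\mathcal{P}_{j,i}$; along it every edge variable equals $1$, so its edge-sum equals $p_{i,j}$ and violates \eqref{equa-constraint-assump-2}. Conversely, for the forward direction $(\Rightarrow)$, I would fix any $(v_{i},v_{j})\in E_{c}$ and any $P_{j}^{i}\in\mathcal{P}_{j,i}$; were the constraint violated, the edge-sum would have to equal $p_{i,j}$, forcing all edges into $E_{u}$ and thereby exhibiting a $v_{j}\to v_{i}$ path in $\mathcal{G}_{u}$, contradicting the unreachability asserted by Assumption \ref{assump-data-usage-constraint-1}. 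Quantifying both arguments over all competing pairs and all paths then yields the equivalence.

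The main obstacle I anticipate is not any single calculation but the careful bookkeeping of quantifiers together with the correct use of the inclusion $E_{u}\subseteq E_{b}$. Unreachability is a statement about the absence of some walk in $\mathcal{G}_{u}$, whereas the constraints range over the set $\mathcal{P}_{j,i}$ of paths defined through the benefit graph; to make these match one must invoke Lemma \ref{lemma-unreachability} to argue that every directed walk from $v_{j}$ to $v_{i}$ in $\mathcal{G}_{u}$ reduces to a simple path that is simultaneously a path of $\mathcal{G}_{b}$ and is therefore already enumerated in $\mathcal{P}_{j,i}$. Once this correspondence between $\mathcal{G}_{u}$-reachability and the all-ones realizability of some path in $\mathcal{P}_{j,i}$ is made precise, the binary-to-Boolean reduction of \eqref{equa-constraint-assump-2} closes the argument with no further difficulty.
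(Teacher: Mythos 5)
Your proof is correct and follows essentially the same route as the paper's: both directions hinge on the observation that the binary edge-sum along a path equals $p_{i,j}$ exactly when every edge lies in $E_{u}$, so capping the sum at $p_{i,j}-1$ is equivalent to the path not being realized in $\mathcal{G}_{u}$, and the paper likewise proves the reverse direction by exhibiting a missing edge on every path and the forward direction by contradiction. Your extra care in reducing $\mathcal{G}_{u}$-reachability to membership of some path in $\mathcal{P}_{j,i}$ via $E_{u}\subseteq E_{b}$ is a point the paper's proof passes over silently (citing Lemma \ref{lemma-unreachability}), but it does not change the substance of the argument.
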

\begin{proof}
Firstly, we prove the reverse direction. By Lemma \ref{lemma-unreachability}, to satisfy Assumption \ref{assump-data-usage-constraint-1}, we only need to focus on such $v_{j}$ and $v_{i}$ that are reachable in $\mathcal{G}_{b}$. 
$P_{j}^{i}$ is defined in Eq. (\ref{def-P-j-i}). If Eq. (\ref{equa-constraint-assump-2}) holds, then, for any $P_{j}^{i}\in \mathcal{P}_{j,i}$ there exist two adjacent nodes $v_{j_{l}}$ and $v_{j_{l+1}}$ in $P_{j}^{i}$, where $l\in [0, p_{i,j}-1]$, such that $x_{j_{l},j_{l+1}}$$=0$ and $(v_{j_{l}}, v_{j_{l+1}})\notin E_{u}$. Thus, there are no reachable paths from $v_{j}$ to $v_{i}$ in $\mathcal{G}_{u}$ and Assumption \ref{assump-data-usage-constraint-1} is satisfied. Secondly, we prove the forward direction by contradiction. The length of $P_{j}^{i}$ is $p_{i,j}$. If Eq. (\ref{equa-constraint-assump-2}) doesn't hold, then, for any $l\in [0, p_{i,j}-1]$, $x_{j_{l},j_{l+1}}=1$ and there exists an edge from $v_{j_{l}}$ to $v_{j_{l+1}}$ in the graph $\mathcal{G}_{u}$, which contradicts Assumption \ref{assump-data-usage-constraint-1} where $v_{j}$ is not reachable to $v_{i}$ in $\mathcal{G}_{u}$.  
\end{proof}

In this paper, we aim to propose a framework that can construct an FL ecosystem without conflict of interest. Mathematically, our problem is to determine the matrix $X_{n\times n}$ of decision variables that satisfy Eq. (\ref{equa-decision-variables}) and (\ref{equa-constraint-assump-2}), which determines the collaborators of FL-PTs. Eq. (\ref{equa-constraint-assump-2}) is equivalent to Assumption \ref{assump-data-usage-constraint-1} by Proposition \ref{proposi-equivalent}. The absence of conflicting interests among FL-PTs is guaranteed by Eq. (\ref{equa-constraint-assump-2}).

\section{Polynomial-Time Algorithm}
\label{sec.algo-collaboration-formation}

We propose a polynomial-time algorithm to determine the matrix $X_{n\times n}$ of decision variables subject to Eq. (\ref{equa-decision-variables}) and (\ref{equa-constraint-assump-2}). We begin by describing the algorithm's initial states. 
The LoP $w_{i}$ in Eq. (\ref{equa-contribution-to-others}) measures the importance of $v_{i}$ to the FL ecosystem. We sort the LoPs of all FL-PTs in non-increasing order, and without loss of generality, we assume:
\begin{align}\label{equa-benefit-sequence}
w_1 \geqslant w_2  \geqslant \cdots \geqslant w_n. 
\end{align}
The initial values of $X_{n\times n}$ are set as follows: 
\begin{align}\label{equa-X-initial-values}
x_{j,i} =  1  \text{ if } i=j, \text{ and }  x_{j,i} =  0  \text{ if } i\neq j.
\end{align}
This defines the initial $\mathcal{G}_{u}$, which will be updated as the algorithm runs. We also define a connectivity matrix $C_{n\times n}$ of $\mathcal{G}_{u}$: when $i\neq j$, $c_{j,i}=1$ if there is a path from $v_{j}$ to $v_{i}$ and $c_{j,i}=0$ otherwise; $c_{i,i}$ is always set to one trivially. Initially, $C_{n\times n}$ is set as an identity matrix, i.e., a diagonal matrix whose main diagonal elements are all one.

\begin{algorithm}[t]
\caption{Collaborator Selection}\label{Greedy}
\KwData{$S_{n\times n}$, and $W_{n\times n}$}
\KwResult{$X_{n\times n}$}

Initialize $X_{n\times n}$ by Eq. (\ref{equa-X-initial-values}) and $C_{n\times n}$ to be an identity matrix\; 

Generate the sorted sequence (i.e., Eq. (\ref{equa-benefit-sequence}))\;

\For{$v_{i}$ in the sorted sequence}{

    Solve the ILP problem (\ref{equa-obj}) by Algorithm \ref{algo-ILP-solver}\;

}
\end{algorithm}

The proposed algorithm is presented as Algorithm \ref{Greedy}. The $n$ FL-PTs are considered sequentially from $v_1$ to $v_n$ (line 3). 
At the step for $v_{i}$ (line 4), the decision variables to be determined are $\{x_{j,i}\}_{j\neq i}$ and we maximize the benefit of $v_i$: 
\begin{align}\label{equa-obj}
\text{maximize}\enskip \sum\nolimits_{j\neq i}{w_{j,i}\cdot x_{j,i}} 
\end{align}
subject to Eq. (\ref{equa-decision-variables}) and (\ref{equa-constraint-assump-2}). Afterwards, $X_{n\times n}$ is updated and the collaborators of $v_{i}$ are determined. Next, we solve the integer linear programming (ILP) problem (\ref{equa-obj}). Let $\mathcal{B}_{i}$ denote all FL-PTs that can benefit $v_{i}$ but are independent of $v_{i}$, which can be defined by the adjacency matrix $W_{n\times n}$ of $\mathcal{G}_{b}$ and the adjacency matrix $S_{n\times n}$ of $\mathcal{G}_{c}$:
\begin{align}\label{equa-B-i}
\mathcal{B}_{i} = \left\{v_{j}\in \mathcal{V}\, |\, j\neq i, w_{j,i}>0, s_{j,i}=0\right\}. 
\end{align} 
$\mathcal{B}_{i}$ includes all possible collaborators of $v_{i}$.

For any $v_{j}\in\mathcal{B}_{i}$, let $\mathcal{V}_{j}^{-}$ denote a set consisting of all nodes that are reachable to $v_{j}$ in $\mathcal{G}_{u}$, as well as $v_{j}$ itself, which can be defined by the connectivity matrix $C_{n\times n}$:
\begin{align}\label{equa-V-j}
\mathcal{V}_{j}^{-} = \left\{v_{k}\in \mathcal{V}\, |\,  c_{k,j}=1\right\}.
\end{align} 
Let $\mathcal{S}_{j}^{-}$ denote all competitors of the nodes in $\mathcal{V}_{j}^{-}$, and $\mathcal{S}_{i,j}^{-}$ denote the nodes of $\mathcal{S}_{j}^{-}$ that are reachable from $v_{i}$ in $\mathcal{G}_{u}$:
\begin{align}
 \mathcal{S}_{j}^{-} & = \left\{v_{k}\in \mathcal{V}\, |\, \exists v_{p}\in \mathcal{V}_{j}^{-}:  s_{k,p}=1\right\},\label{equa-S-j} \\
 \mathcal{S}_{i,j}^{-} & = \left\{v_{k}\in \mathcal{S}_{j}^{-}\, |\,  c_{i,k}=1\right\}  \subseteq \mathcal{S}_{j}^{-}.\label{equa-S-i-j-1}
\end{align} 
As illustrated in Figure \ref{Fig-factors-collaborators-selection}(a), if $\mathcal{S}_{i,j}^{-}\neq \emptyset$, we have $x_{j,i}=0$; otherwise, some nodes in $\mathcal{V}_{j}^{-}$ will be reachable to its competitor (e.g., the node in the oval) in $\mathcal{G}_{u}$, which violates Eq. (\ref{equa-constraint-assump-2}). Let $\mathcal{V}_{i}^{+}$ denote a set consisting of all nodes that are reachable from $v_{i}$ in $\mathcal{G}_{u}$, as well as $v_{i}$ itself:
\begin{align}\label{equa-V-i}
\mathcal{V}_{i}^{+} = \{v_{k}\in \mathcal{V}\, |\,  c_{i,k}=1\}. 
\end{align} 
Let $\mathcal{S}_{i}^{+}$ denote all competitors of the nodes in $\mathcal{V}_{i}^{+}$, and $\mathcal{S}_{i,j}^{+}$ denote the nodes of $\mathcal{S}_{i}^{+}$ that are reachable to $v_{j}$ in $\mathcal{G}_{u}$: 
\begin{align}
 \mathcal{S}_{i}^{+} & = \{v_{k}\in \mathcal{V}\, |\, \exists v_{p}\in \mathcal{V}_{i}^{+}:  s_{p,k}=1\},\label{equa-S-i} \\
 \mathcal{S}_{i,j}^{+} & = \{v_{k}\in \mathcal{S}_{i}^{+}\, |\,  c_{k,j}=1\}\subseteq \mathcal{S}_{i}^{+}.\label{equa-S-i-j-2}
\end{align} 
Here, by Eq. (\ref{equa-V-j}), (\ref{equa-S-i-j-1}), (\ref{equa-V-i}), and (\ref{equa-S-i-j-2}), we have 
\begin{align}\label{equal-reachable-competitors}
 \mathcal{S}_{i,j}^{-} = \mathcal{V}_{i}^{+}\cap \mathcal{S}_{j}^{-} \subseteq \mathcal{V}_{i}^{+} \text{ and } \mathcal{S}_{i,j}^{+} = \mathcal{V}_{j}^{-}\cap \mathcal{S}_{i}^{+} \subseteq \mathcal{V}_{j}^{-}.
\end{align}
As illustrated in Figure \ref{Fig-factors-collaborators-selection}(b), if $\mathcal{S}_{i,j}^{+}\neq \emptyset$, then $x_{j,i}=0$; 
otherwise, some nodes in $\mathcal{V}_{i}^{+}$ will be reachable from its competitor (e.g., the node in the oval) in $\mathcal{G}_{u}$, violating Eq. (\ref{equa-constraint-assump-2}).

Based on the above understanding, we propose Algorithm \ref{algo-ILP-solver} to solve the ILP problem (\ref{equa-obj}). 
For a node $v_{j}\in\mathcal{B}_{i}$, $w_{j,i}$ represents the importance of $v_{j}$ to $v_{i}$. We sort the nodes of $\mathcal{B}_{i}$ in the non-increasing order of their values $w_{j,i}$ (line 1). The nodes of $\mathcal{B}_{i}$ are considered sequentially in this order (line 2). For each node $v_{j}\in\mathcal{B}_{i}$, if $\mathcal{S}_{i,j}^{+}=\emptyset$ and $\mathcal{S}_{i,j}^{-}=\emptyset$, the algorithm sets $v_{j}$ as the collaborator of $v_{i}$ (i.e., $x_{j,i}=1$), with the connectivity from $v_{j}$ to $v_{i}$ is updated (lines 3-4). Finally, we consider the effect of setting $x_{j,i}=1$ on the connectivity between any two nodes $v_{p}$ and $v_{q}$ in the graph $\mathcal{G}_{u}$, except $(v_{j}, v_{i})$ (line 5). In the graph $\mathcal{G}_{u}$, if we have before executing line 4 that $v_{p}$ is not reachable to $v_{q}$, $v_{p}$ is reachable to $v_{j}$, and $v_{i}$ is reachable to $v_{q}$, then $v_{p}$ becomes reachable to $v_{q}$ (lines 6-7).


\begin{lemma}\label{lemma-time-complexity}
Given $W_{n\times n}$, $S_{n\times n}$ and $C_{n\times n}$, the time complexity of finding $\mathcal{B}_{i}$ is $\mathcal{O}(n)$ while  the time complexity of finding $\mathcal{S}_{i,j}^{-}$ or $\mathcal{S}_{i,j}^{+}$ is $\mathcal{O}(n^{2})$.
\end{lemma}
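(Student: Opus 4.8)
The plan is to bound the cost of each set computation by counting elementary matrix lookups, since all three sets are defined directly through entries of $W_{n\times n}$, $S_{n\times n}$, and $C_{n\times n}$. No combinatorial insight is needed beyond translating each set-builder definition into the loop structure it prescribes.

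First I would handle $\mathcal{B}_{i}$. By Eq.~(\ref{equa-B-i}), membership of $v_{j}$ in $\mathcal{B}_{i}$ is decided by the two conditions $w_{j,i}>0$ and $s_{j,i}=0$, which read the $i$-th column of $W$ and the $i$-th column of $S$ at position $j$. Each such test is an $\mathcal{O}(1)$ lookup, and we perform it once for each of the $n-1$ indices $j\neq i$, so finding $\mathcal{B}_{i}$ costs $\mathcal{O}(n)$.

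Next I would analyze $\mathcal{S}_{i,j}^{-}$, which by Eqs.~(\ref{equa-V-j})--(\ref{equa-S-i-j-1}) is built in three stages. Stage one computes $\mathcal{V}_{j}^{-}$ by scanning the $j$-th column of $C$ for entries equal to one, costing $\mathcal{O}(n)$. Stage two computes $\mathcal{S}_{j}^{-}$: a node $v_{k}$ belongs to $\mathcal{S}_{j}^{-}$ exactly when some $v_{p}\in\mathcal{V}_{j}^{-}$ satisfies $s_{k,p}=1$. For each of the $n$ candidate indices $k$ we evaluate this existential by scanning the entries $s_{k,p}$ over $p$ with $v_{p}\in\mathcal{V}_{j}^{-}$, and since $|\mathcal{V}_{j}^{-}|\leqslant n$ this is $\mathcal{O}(n)$ per candidate, giving $\mathcal{O}(n^{2})$ overall. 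Stage three extracts $\mathcal{S}_{i,j}^{-}$ from $\mathcal{S}_{j}^{-}$ by retaining those $v_{k}$ with $c_{i,k}=1$, an $\mathcal{O}(1)$ test applied to at most $n$ elements, hence $\mathcal{O}(n)$. The second stage dominates, so finding $\mathcal{S}_{i,j}^{-}$ costs $\mathcal{O}(n^{2})$.

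Finally, $\mathcal{S}_{i,j}^{+}$ is handled by the same argument under the natural symmetry: by Eqs.~(\ref{equa-V-i})--(\ref{equa-S-i-j-2}) one scans the $i$-th row of $C$ to obtain $\mathcal{V}_{i}^{+}$, tests the existential $s_{p,k}=1$ over $p$ with $v_{p}\in\mathcal{V}_{i}^{+}$ to build $\mathcal{S}_{i}^{+}$, and filters by $c_{k,j}=1$ using the $j$-th column of $C$. The step counts are identical, again yielding $\mathcal{O}(n^{2})$. I do not expect a genuine obstacle here; the only subtlety worth stating explicitly is that the existential quantifier defining $\mathcal{S}_{j}^{-}$ (respectively $\mathcal{S}_{i}^{+}$) forces a nested scan over $\mathcal{V}_{j}^{-}$ (respectively $\mathcal{V}_{i}^{+}$), and it is precisely this quadratic stage, rather than the linear scans surrounding it, that determines the $\mathcal{O}(n^{2})$ bound.
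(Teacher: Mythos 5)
Your proposal is correct and follows essentially the same route as the paper's proof: a linear scan of the $i$-th columns of $W$ and $S$ for $\mathcal{B}_{i}$, and a three-stage decomposition of $\mathcal{S}_{i,j}^{-}$ (compute $\mathcal{V}_{j}^{-}$ in $\mathcal{O}(n)$, then $\mathcal{S}_{j}^{-}$ via the nested existential scan in $\mathcal{O}(n^{2})$, then filter by $c_{i,k}$ in $\mathcal{O}(n)$), with the symmetric argument for $\mathcal{S}_{i,j}^{+}$. Your explicit identification of the existential quantifier in Eq.~(\ref{equa-S-j}) as the source of the quadratic term matches the paper's step (\rmnum{2}) exactly.
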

\begin{proof}
By Eq. (\ref{equa-B-i}), the time complexity of finding $\mathcal{B}_{i}$ is $\mathcal{O}(n)$ where $|\mathcal{B}_{i}|\leqslant n$. By Eq. (\ref{equa-V-j}), the time complexity of finding $\mathcal{V}_{j}^{-}$ is $\mathcal{O}(n)$ where $|\mathcal{V}_{j}^{-}|\leqslant n$. By Eq. (\ref{equa-S-j}), $\mathcal{S}_{j}^{-}$ can be found by (\rmnum{1}) checking every $v_{k}\in\mathcal{V}$ and (\rmnum{2}) judging whether there exists a node $v_{p}\in\mathcal{V}_{j}^{-}$ such that $s_{k,p}=1$; the resulting time complexity is $\mathcal{O}(n^{2})$; here, $|\mathcal{S}_{j}^{-}|\leqslant n$. Given $\mathcal{S}_{j}^{-}$, by Eq. (\ref{equa-S-i-j-1}), the time complexity of finding $\mathcal{S}_{i,j}^{-}$ is $\mathcal{O}(n)$. Finally, the time complexity of finding $\mathcal{S}_{i,j}^{-}$ is $\mathcal{O}(n^{2})$. Similarly to $\mathcal{S}_{i,j}^{-}$, the time complexity of finding $\mathcal{S}_{i,j}^{+}$ is also $\mathcal{O}(n^{2})$.   
\end{proof}

\begin{algorithm}[t]
\caption{ILP Solver}\label{algo-ILP-solver}
\KwData{$W_{n\times n}$, $S_{n\times n}$, and $C_{n\times n}$}
\KwResult{the updated $X_{n\times n}$, and $C_{n\times n}$}



Sort the nodes of $\mathcal{B}_{i}$ in non-increasing order of their values $w_{j,i}$, generating a sorted sequence\;

\For{$v_{j}$ in the sorted sequence}{
   \If{$\mathcal{S}_{i,j}^{+}=\emptyset$ $\wedge$ $\mathcal{S}_{i,j}^{-}=\emptyset$}{
       $x_{j,i}\leftarrow 1$, $c_{j,i}\leftarrow 1$\;
       \For{any two integers $p\in [1, n]$ and $q\in [1, n]$ with $p\neq q$ and $(p,q)\neq (j, i)$}{
               \If{$c_{p,q}=0$ $\wedge$ $c_{p,j}=1 \wedge c_{i,q}=1$}{
                   $c_{p,q}\leftarrow 1$\;
           }
       }
   }
}
\end{algorithm}


\begin{proposition}\label{proposi-ILP-solver}
Suppose $X_{n\times n}$ satisfies Eq. (\ref{equa-decision-variables}) and (\ref{equa-constraint-assump-2}) before $v_{i}$ is considered. Algorithm \ref{algo-ILP-solver} gives a feasible solution to the ILP problem \eqref{equa-obj} with a time complexity $\mathcal{O}(n^{3})$ when $v_{i}$ is considered.
\end{proposition}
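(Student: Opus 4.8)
The plan is to establish the two claims separately: first that the output is \emph{feasible}, i.e.\ that it satisfies Eq.~\eqref{equa-decision-variables} and Eq.~\eqref{equa-constraint-assump-2}, and then that the running time is $\mathcal{O}(n^{3})$. Feasibility with respect to Eq.~\eqref{equa-decision-variables} is immediate, since Algorithm~\ref{algo-ILP-solver} only ever assigns the values $0$ or $1$. For Eq.~\eqref{equa-constraint-assump-2} I would invoke Proposition~\ref{proposi-equivalent} to replace the path constraints by the equivalent reachability condition of Assumption~\ref{assump-data-usage-constraint-1}: it then suffices to show that, throughout the processing of $v_{i}$, no competing pair ever becomes reachable in $\mathcal{G}_{u}$. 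I would prove this by induction on the edges $(v_{j},v_{i})$ inserted during the step for $v_{i}$, carrying two invariants: (a) $\mathcal{G}_{u}$ satisfies Assumption~\ref{assump-data-usage-constraint-1}, and (b) the matrix $C$ equals the reachability (transitive-closure) matrix of the current $\mathcal{G}_{u}$. The base case combines the hypothesis that $X$ is feasible before $v_{i}$ is considered with the inductively maintained correctness of $C$.

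For invariant (b), I would argue that lines 4--7 correctly perform the single-edge transitive-closure update $c'_{p,q}=c_{p,q}\vee(c_{p,j}\wedge c_{i,q})$: any path created by inserting $(v_{j},v_{i})$ must traverse this edge, and hence decomposes into a sub-path reaching $v_{j}$ followed by a sub-path leaving $v_{i}$. The one point needing care is that lines 6--7 read the entries $c_{p,j}$ and $c_{i,q}$ after line 4 has already set $c_{j,i}\leftarrow 1$; here I would observe that an edge out of $v_{j}$ and into $v_{i}$ cannot change any ``reach-to-$v_{j}$'' entry $c_{\cdot,j}$ nor any ``reach-from-$v_{i}$'' entry $c_{i,\cdot}$, so the values used are the correct pre-update ones and the sets $\mathcal{V}_{j}^{-}$ and $\mathcal{V}_{i}^{+}$ of Eq.~\eqref{equa-V-j} and Eq.~\eqref{equa-V-i} are exactly the factors of the newly reachable pairs.

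The crux is invariant (a). Conditioned on (b), the only pairs that inserting $(v_{j},v_{i})$ can newly render reachable lie in $\mathcal{V}_{j}^{-}\times\mathcal{V}_{i}^{+}$, since any new path must enter via $v_{j}$, cross the inserted edge to $v_{i}$, and then leave from $v_{i}$. As the graph was feasible before the insertion, a brand-new violation must therefore be a competing pair $(v_{a},v_{b})\in E_{c}$ with $v_{a}\in\mathcal{V}_{j}^{-}$ and $v_{b}\in\mathcal{V}_{i}^{+}$. I would then show this is impossible under the guard of line 3: since $v_{b}$ competes with $v_{a}\in\mathcal{V}_{j}^{-}$ we get $v_{b}\in\mathcal{S}_{j}^{-}$ by Eq.~\eqref{equa-S-j}, and $v_{b}\in\mathcal{V}_{i}^{+}$ then gives $v_{b}\in\mathcal{V}_{i}^{+}\cap\mathcal{S}_{j}^{-}=\mathcal{S}_{i,j}^{-}$ by Eq.~\eqref{equal-reachable-competitors}, contradicting $\mathcal{S}_{i,j}^{-}=\emptyset$; symmetrically $v_{a}\in\mathcal{S}_{i,j}^{+}$ contradicts $\mathcal{S}_{i,j}^{+}=\emptyset$ (and the symmetry of $S$ in fact makes the two guards coincide). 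Hence no competing pair becomes reachable, the invariant is preserved, and by induction the final $X$ is feasible. I expect this mapping of a hypothetical new path into the guard sets $\mathcal{S}_{i,j}^{\pm}$, together with the verification that $\mathcal{V}_{j}^{-}\times\mathcal{V}_{i}^{+}$ captures \emph{all} new reachabilities, to be the main obstacle.

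Finally, for the complexity bound I would simply tally the work in Algorithm~\ref{algo-ILP-solver}. Sorting $\mathcal{B}_{i}$ costs $\mathcal{O}(n\log n)$, and the loop runs at most $|\mathcal{B}_{i}|\le n$ times. By Lemma~\ref{lemma-time-complexity}, each iteration computes $\mathcal{S}_{i,j}^{+}$ and $\mathcal{S}_{i,j}^{-}$ in $\mathcal{O}(n^{2})$, and the connectivity update in lines 5--7 ranges over all ordered pairs $(p,q)$, also costing $\mathcal{O}(n^{2})$. Multiplying the per-iteration $\mathcal{O}(n^{2})$ by the $\mathcal{O}(n)$ iterations yields $\mathcal{O}(n^{3})$, which dominates the sorting term and gives the claimed bound.
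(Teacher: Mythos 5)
Your proposal is correct and follows essentially the same route as the paper's proof: the new reachabilities created by inserting $(v_{j},v_{i})$ are confined to $\mathcal{V}_{j}^{-}\times\mathcal{V}_{i}^{+}$, any violating competing pair would land in $\mathcal{S}_{i,j}^{-}$ or $\mathcal{S}_{i,j}^{+}$ and contradict the guard in line 3, and the complexity tally is identical. The only difference is that you additionally verify the invariant that $C$ remains the transitive closure of $\mathcal{G}_{u}$ under the incremental update of lines 5--7 (and observe that the two guards coincide by symmetry of $S$), points the paper's proof leaves implicit.
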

\begin{proof}
By Proposition \ref{proposi-equivalent}, Eq. (\ref{equa-constraint-assump-2}) is equivalent to Assumption \ref{assump-data-usage-constraint-1}. Firstly, we prove by contradiction that Algorithm \ref{algo-ILP-solver} gives a feasible solution. 
Before $v_{i}$ is considered, no two competitors in $\mathcal{V}$ are reachable in $\mathcal{G}_{u}$ by Assumption \ref{assump-data-usage-constraint-1}. Setting $x_{j,i}=1$ is equivalent to adding an edge $(v_{j}, v_{i})$ in $\mathcal{G}_{u}$. By the definition of $\mathcal{V}_{j}^{-}$ and $\mathcal{V}_{i}^{+}$, the addition of $(v_{j}, v_{i})$ can only affect the reachability from the nodes of $\mathcal{V}_{j}^{-}$ to the nodes of $\mathcal{V}_{i}^{+}$ in $\mathcal{G}_{u}$. Suppose there exists a node $v_{j}\in \mathcal{B}_{i}$ satisfying $\mathcal{S}_{i,j}^{+}=\emptyset$ and $\mathcal{S}_{i,j}^{-}=\emptyset$, such that, Assumption \ref{assump-data-usage-constraint-1} is violated after setting $x_{j,i}=1$. 
Thus, the addition of $(v_{j}, v_{i})$ leads to that some node of $\mathcal{V}_{j}^{-}$ is reachable to and competes with some node of $\mathcal{V}_{i}^{+}$ in $\mathcal{G}_{u}$. Then, there exists a node $v_{k}$ such that either $v_{k}\in \mathcal{V}_{j}^{-}$ and $v_{k}$ is a competitor of some node in $\mathcal{V}_{i}^{+}$ (i.e., $v_{k}\in \mathcal{S}_{i,j}^{+}$ by Eq. (\ref{equa-S-i}) and (\ref{equal-reachable-competitors})), or $v_{k}\in \mathcal{V}_{j}^{+}$ and $v_{k}$ is a competitor of the nodes of $\mathcal{V}_{i}^{-}$ (i.e., $v_{k}\in \mathcal{S}_{i,j}^{-}$ by Eq. (\ref{equa-S-j}) and (\ref{equal-reachable-competitors})). 
$\mathcal{S}_{i,j}^{-}$ and $\mathcal{S}_{i,j}^{+}$ are non-empty, which contradicts the condition in line 3 that leads to $x_{j,i}=1$.

Secondly, we show the complexity of Algorithm \ref{algo-ILP-solver}. Given $\mathcal{B}_{i}$, the time complexity of sorting the nodes of $\mathcal{B}_{i}$ is $\mathcal{O}(n\log{n})$, e.g., using the mergesort algorithm. Thus, by Lemma \ref{lemma-time-complexity}, the time complexity in line 1 is $\mathcal{O}(n\log{n})$. For the for-loop in line 2, its time complexity is $\mathcal{O}(n)$ where $|\mathcal{B}_{i}|\leqslant n$; by Lemma \ref{lemma-time-complexity}, the time complexity in line 3 is $\mathcal{O}(n^{2})$. For the for-loop in line 5, the time complexity is $\mathcal{O}(n^{2})$. The total time complexity in lines 2--7 is $\mathcal{O}(n^{3})$. Finally, Algorithm \ref{algo-ILP-solver} has a time complexity $\mathcal{O}(n^{3})$. 
\end{proof}

We show the correctness of Algorithm \ref{Greedy}. At the beginning of Algorithm \ref{Greedy}, $X_{n\times n}$ satisfies Eq. (\ref{equa-decision-variables}) and (\ref{equa-constraint-assump-2}) by Eq. (\ref{equa-X-initial-values}). After each step for $v_{i}$ in line 4, $X_{n\times n}$ still satisfies these constraints by Proposition \ref{proposi-ILP-solver}. When Algorithm \ref{Greedy} ends, the final collaborating relationship among all FL-PTs is determined by $X_{n\times n}$. By Eq. \eqref{equa-contribution-to-others}, the time complexity of computing $w_{i}$ for each FL-PT $v_{i}$ is $\mathcal{O}(n)$; thus, the time complexity of computing $w_{1}, w_{2}, \cdots, w_{n}$ is $\mathcal{O}(n^{2})$. The time complexity of sorting $w_{1}, w_{2}, \cdots, w_{n}$ is $\mathcal{O}(n\log{n})$. Thus, the time complexity in line 2 of Algorithm \ref{Greedy} is $\mathcal{O}(n^{2})$. By Proposition \ref{proposi-ILP-solver}, the time complexity in lines 3-4 is $\mathcal{O}(n^{4})$. Thus, the time complexity of Algorithm \ref{Greedy} is $\mathcal{O}(n^{4})$.


\begin{table*}[t]
	\centering
		\caption{Experiments with synthetic data under fixed competing graphs}
	\begin{threeparttable}[b]
 \small
		\begin{tabular}{|C{2.66cm}| C{1.38cm} | C{1.38cm} | C{1.38cm} | C{1.38cm} | C{1.38cm} |C{1.38cm}| C{1.38cm} | C{1.38cm} |}
			\hline
\multicolumn{9}{|c|}{Weakly Non-IID setting (MSE)} \\ \cline{1-9} 
     &  $v_{1}$   &   $v_{2}$   &   $v_{3}$   &  $v_{4}$   &   $v_{5}$    &   $v_{6}$    &   $v_{7}$   &  $v_{8}$    \\ \hline

        Local     &   0.23$\pm$0.08   &   0.23$\pm$0.09    &  0.87$\pm$0.41     &   0.82$\pm$0.26   &   0.23$\pm$0.10  &  0.23$\pm$0.07   &   0.82$\pm$0.24    &  0.78$\pm$0.30      \\ 

        FedAvg    &   0.20$\pm$0.06   &   0.20$\pm$0.06    &  0.20$\pm$0.10     &   0.19$\pm$0.07   &   0.19$\pm$0.06  &  0.19$\pm$0.06   &   0.19$\pm$0.08    &  0.19$\pm$0.10      \\ 

        FedProx     &   0.16$\pm$0.06   &   0.17$\pm$0.07    &  0.15$\pm$0.09     &   0.17$\pm$0.08   &   0.17$\pm$0.06  &  0.17$\pm$0.06   &   0.16$\pm$0.09    &  0.18$\pm$0.07      \\ 

        SCAFFOLD    &   0.17$\pm$0.07   &   0.17$\pm$0.07    &  0.16$\pm$0.09     &   0.16$\pm$0.07   &   0.18$\pm$0.06  &  0.18$\pm$0.07   &   0.18$\pm$0.08    &  0.18$\pm$0.08      \\ 
      
        CE     &    \textbf{0.14$\pm$0.10}   &   0.14$\pm$0.11    &  1.14$\pm$0.67     &   1.20$\pm$0.88   &    \textbf{0.15$\pm$0.08}  &  0.16$\pm$0.09   &   1.23$\pm$0.37    &  1.22$\pm$0.81      \\ \hline
        
        \methodname{}  &   0.14$\pm$0.12   &   \textbf{0.14$\pm$0.07}    &  \textbf{0.13$\pm$0.06}   &    \textbf{0.15$\pm$0.06}    &   \textbf{0.15$\pm$0.08}   &    \textbf{0.14$\pm$0.06}    &   \textbf{0.14$\pm$0.07}    &   \textbf{0.14$\pm$0.07}       \\ \hline\hline

\multicolumn{9}{|c|}{Strongly Non-IID Setting (MSE)} \\ \cline{1-9} 
     &  $v_{1}$   &   $v_{2}$   &   $v_{3}$   &  $v_{4}$   &   $v_{5}$    &   $v_{6}$    &   $v_{7}$   &  $v_{8}$    \\ \hline

        Local     &   0.23$\pm$0.08   &   0.23$\pm$0.08    &  0.22$\pm$0.07     &   0.23$\pm$0.08   &   0.23$\pm$0.06  &  0.22$\pm$0.06   &   0.22$\pm$0.08    &  0.23$\pm$0.07      \\ 

        FedAvg    &   24.47$\pm$4.98   &   24.85$\pm$4.82    &  24.85$\pm$5.03     &   24.73$\pm$5.67   &   24.15$\pm$3.00  &  24.47$\pm$2.78   &   24.17$\pm$4.40    &  24.97$\pm$3.81      \\ 

        FedProx     &   17.80$\pm$7.54   &   17.82$\pm$6.42    &  17.88$\pm$7.68     &   17.86$\pm$7.64   &   17.69$\pm$7.14  &  17.76$\pm$6.23   &   17.68$\pm$5.94    &  17.73$\pm$7.04      \\ 

        SCAFFOLD    &   17.22$\pm$2.85   &   17.44$\pm$2.17    &  17.39$\pm$4.02     &   17.20$\pm$3.58   &   16.87$\pm$2.75  &  17.13$\pm$2.79   &   17.00$\pm$2.41    &  17.33$\pm$2.59      \\ 
      
        CE     &   0.15$\pm$0.12   &   0.14$\pm$0.11    &  0.14$\pm$0.07     &   \textbf{0.14$\pm$0.07}   &   0.14$\pm$0.06  &  \textbf{0.14$\pm$0.06}   &   0.12$\pm$0.05    &  \textbf{0.12$\pm$0.05}      \\ \hline
        
        \methodname{}  &   \textbf{0.14$\pm$0.07}   &   \textbf{0.13$\pm$0.06}    &  \textbf{0.13$\pm$0.06}   &   0.14$\pm$0.09    &   \textbf{0.13$\pm$0.07}   &   \textbf{0.14$\pm$0.06}    &  \textbf{0.11$\pm$0.04}    &   0.13$\pm$0.07       \\ \hline

		\end{tabular}
	\end{threeparttable}
	\label{table-exp-synthetic}
\end{table*}

\section{Experimental Evaluation}

We conduct experiments on synthetic data and the CIFAR-10 dataset. To investigate the practicality of \methodname, we also adopt the electronic health record (EHR) dataset eICU \cite{pollard2018eicu} to illustrate the collaboration relationships of FL-PTs on a real-world network of multiple hospitals. 

\subsection{Comparison baselines}
\label{sec.baseline}

Compared with the proposed approach in the last section, we now give a more intuitive procedure to address the competing relationships among FL-PTs. This procedure makes the previous FL approaches (e.g., FedAvg) applicable to the scenario of this paper. 
At a high level, we will find a partition of all FL-PTs into several disjoint groups such that the FL-PTs in each group are independent of each other, without conflict of interest. Then, baselines can be generated by directly applying the previous FL approaches to each group of FL-PTs. 
Specifically, the competing graph $\mathcal{G}_{c}$ describes the competing relationship among FL-PTs. Let $\mathcal{G}_{c}^{-}$ denote the complement of $\mathcal{G}_{c}$: the nonexistence of an edge between $v_{i}$ and $v_{j}$ in $\mathcal{G}_{c}$ leads to the existence of an edge $(v_{i}, v_{j})$ in $\mathcal{G}_{c}^{-}$, and vice versa. Each edge in the graph $\mathcal{G}_{c}^{-}$ indicates that the two FL-PTs connected by this edge are independent. A clique is a subset of nodes of $\mathcal{G}_{c}^{-}$ such that every two nodes in the clique are adjacent, that is, a clique is a subgraph that is complete. A clique cover of $\mathcal{G}_{c}^{-}$ is a partition of all nodes into cliques within which every two nodes in the clique are adjacent and independent of each other \cite{tomita2006worst}. A minimum clique cover is a clique cover that uses as few cliques as possible. %

The FL-PTs in each clique are grouped together to take FL training, without involving the FL-PTs from other cliques. We apply four typical FL approaches directly to the nodes of each clique for FL training: \textbf{FedAvg}, \textbf{CE}, \textbf{FedProx} \cite{MLSYS2020_1f5fe839} and \textbf{SCAFFOLD} \cite{pmlr-v119-karimireddy20a}, which generates four baselines. The collaboration equilibrium (CE) approach is proposed in \cite{Cui22a} where each coalition is defined as a strongly connected component of the benefit graph; its effectiveness has well been validated against several other approaches. FedProx and SCAFFOLD represent two typical approaches that make the aggregated model at the CS close to the global optima and are two benchmarks in \cite{li2022federated} for showing the FL performance under Non-IID data settings. 
The fifth baseline is \textbf{Local} where each FL-PT takes local ML training without collaboration.


\textbf{General experimental setting.} Like \cite{Cui22a}, the hypernetwork technique in \cite{navon2021learning} is used to compute the benefit graph $\mathcal{G}_{b}$ and a hypernetwork is constructed by a multilayer perceptron (MLP). 
When it comes to a specific dataset, all approaches have the same network structure for each FL-PT to execute the learning tasks.


\subsection{Synthetic experiments}

We show the experimental results on synthetic data with fixed competing graphs. Specifically, let us consider 8 FL-PTs $\{v_{1}, v_{2}, \cdots, v_{8}\}$. 
The synthetic features are generated by $x\sim \mathcal{U}[-1.0, 1.0]$. Given the FL-PT $v_i$, the grand truth weights $u_{i,l}=v_{l}+r_{i,l}$ are sampled as $v\sim$ $\mathcal{U}[0.0, 1.0]$ and $r_{i,l}\sim$ $\mathcal{N}(0.0, \rho^{2})$ where $l\in \{1, 2, 3\}$; the noise $\epsilon \sim \mathcal{N}(0.0,$ $0.1^{2})$ is added to each label.

\textbf{Weakly Non-IID setting.} $\rho^{2}$ measures the data distribution discrepancy among FL-PTs. We set $\rho = 0.01$, which means that the generated data are weakly non-iid in terms of sample features and labels. The same type of polynomial regression tasks is learned by all FL-PTs and the synthetic labels are defined as: $y=\sum_{l=1}^{3}{u_{i,l}^{T}x^{l}} +\epsilon$. The network used for predicting the label at each FL-PT is an MLP with one hidden layer. FL-PTs $v_{1}$, $v_{2}$, $v_{5}$ and $v_{6}$ have 2000 samples, while the other FL-PTs have 100 samples. Thus, there exists quantity skew, i.e., a significant difference in the sample quantities of FL-PTs. Two large FL-PTs $v_{1}$ and $v_{2}$ are independent and compete with the other two large FL-PTs $v_{5}$ and $v_{6}$ that are independent. Each small FL-PT competes one large FL-PT: $(v_{1}, v_{7})$, $(v_{2}, v_{8})$, $(v_{3}, v_{5})$, and $(v_{4}, v_{6})$ are edges in the competing graph $\mathcal{G}_{c}$. Such $\mathcal{G}_{c}$ leads to a unique clique cover. Under this setting, the minimum clique cover of $\mathcal{G}_{c}^{-}$ is $\{v_{i}\}_{i=1}^{4}$ and $\{v_{i}\}_{i=5}^{8}$, and small FL-PTs benefit large FL-PTs little. The experimental results (measured by mean squared error (MSE)) are given in Table \ref{table-exp-synthetic}. On average, CE has the worst performance since small FL-PTs $v_{3}$, $v_{4}$, $v_{7}$ and $v_{8}$ cannot benefit from large FL-PTs. Particularly, \methodname{} has the best performance compared with the five baselines.

\begin{table*}[t]
	\centering
		\caption{Experiments with eICU under a fixed competing graph}
	\label{table-exp-hospitial}
	\begin{threeparttable}[t]
 \small
		\begin{tabular}{|C{2.66cm}| C{1.02cm} | C{1.02cm} | C{1.02cm} | C{1.02cm} | C{1.02cm} |C{1.02cm}| C{1.02cm} | C{1.02cm} | C{1.02cm} | C{1.02cm} |}
			\hline
   \multirow{2}{*}{} &  \multicolumn{10}{c|}{AUC} \\ \cline{2-11}    
     &  $v_{1}$   &   $v_{2}$   &   $v_{3}$   &  $v_{4}$   &   $v_{5}$    &   $v_{6}$    &   $v_{7}$   &  $v_{8}$   &  $v_{9}$   &  $v_{10}$    \\ \hline

        Local     &   76.12   &   69.46    &  68.94     &   68.04   &   76.46  &  40.00   &   69.30    &  60.53    & 56.94    &  49.12      \\ 

        FedAvg    &   75.26   &   72.09    &  68.87     &   74.13   &   83.72  &  41.67   &   79.37    &  54.41    & 66.67    &  38.10      \\ 


      
        CE     &    \textbf{83.53}   &   75.64    &    \textbf{74.38}     &   74.46   &    80.89  &  82.61   &   71.43    &  66.67    &  66.67    &  80.00     \\ \hline

        \methodname{}  &  81.50   &   \textbf{78.23}    &  69.18   &    \textbf{83.52}    &   \textbf{85.91}   &    \textbf{89.58}    &   \textbf{80.70}    &   \textbf{68.89}    &  \textbf{90.48}    &  \textbf{95.24}       \\ \hline

		\end{tabular}
	\end{threeparttable}
\end{table*}

\begin{table}[t]
\captionsetup{
  justification = centering
}
	\centering
		\caption{Experiments with CIFAR-10 under randomly generated competing graphs}
	\begin{threeparttable}[b]
 \small
		\begin{tabular}{|C{3.0cm}| C{2.4cm} |}
			\hline
                     &      MTA     \\ \hline
          Local     &         $86.46\pm 4.12$  \\ 
          FedAvg    &         $52.99\pm 4.38$     \\ 
          FedProx    &        $51.13\pm 7.10$     \\ 
          SCAFFOLD    &       $51.20\pm 7.09$     \\ 
          CE        &         $87.80\pm 7.18$    \\ \hline
          \methodname{}   &    \textbf{91.33} $\pm$ \textbf{4.14}     \\ \hline		
          \end{tabular}
	\end{threeparttable}
	\label{table-exp-cifar10}
\end{table}

\textbf{Strongly Non-IID setting.} This setting is the same as the setting above expect three aspects. Firstly, each FL-PT has 2000 samples and there is no quantity skew. Secondly, we generate conflicting learning tasks by flipping over the labels of some FL-PTs: $y = -\sum_{l=1}^{3}{u_{i,l}^{T}x^{l}}+\epsilon$ for $i\in \{5,6,7,8\}$, which leads to strongly Non-IID among the eight FL-PTs in terms of the labels. Thirdly, we test on a different competing graph where there are two independent groups of FL-PTs $\{v_{i}\}_{i=1}^{4}$ and $\{v_{i}\}_{i=5}^{8}$: for $i\in \{1, 5\}$, the FL-PTs $v_{i}$ and $v_{i+1}$ are independent of each other and compete with $v_{i+2}$ and $v_{i+3}$ that are also independent of each other. Under this setting, all FL-PTs in the same group can benefit each other; the minimum clique cover of $\mathcal{G}_{c}^{-}$ is $\{v_{1},$ $v_{2}, v_{5}, v_{6}\}$ and $\{v_{3}, v_{4}, v_{7}, v_{8}\}$. The experimental results are given in Table \ref{table-exp-synthetic}. FedAvg, FedProx, and SCAFFOLD perform the worst since training a global model cannot simultaneously satisfy the FL-PTs in the same clique with conflicting learning tasks. It is observed that \methodname{} has the best performance compared with the five baselines.

\subsection{Benchmark experiments}
\label{sec.exp-cifar10}

We conduct experiments on CIFAR-10 with competing graphs that are generated randomly. CIFAR-10 is an image classification dataset and has 10 classes, each with 6000 images. We follow the setting in \cite{Cui22a} for CIFAR-10 to construct Non-IID data and network structures, and to measure performance. There are 10 FL-PTs, and each FL-PT randomly obtains 2 of the 10 classes to simulate the Non-IID setting. 
The model performance is measured by the mean test accuracy (MTA). To simulate competition, we set the probability of two FL-PTs competing against each other to 0.2, thus generating a random competing graph $\mathcal{G}_{c}$, which constrains the collaboration between some FL-PTs. Table \ref{table-exp-cifar10} shows the experimental results. It is observed that \methodname{} has the best performance. FedAvg, FedProx, and SCAFFOLD perform worst since training a global model cannot simultaneously satisfy the FL-PTs in the same clique with data heterogeneity. \methodname{} performs better than CE by 3.53\%.  

\subsection{Hospital collaboration example}

eICU is a dataset collecting EHRs from many hospitals across the United States admitted to the intensive care unit (ICU). The task is to predict mortality during hospitalization. We use this dataset to illustrate a benefit graph $\mathcal{G}_{b}$ and a data usage graph $\mathcal{G}_{u}$ in the real world. The setting here is the same as the setting in \cite{Cui22a} for eICU, including the data pre-processing procedure, the way of choosing hospitals, the network structures, and the performance metric. There are 10 hospitals, among which the first 5 hospitals $\{v_{i}\}_{i=1}^{5}$ are large with about 1000 patients per hospital and the others are small with about 100 patients per hospital. Label imbalance occurs since more than 90\% samples have negative labels; thus, AUC is used to measure the utility of each FL-PT. 
The generated benefit graph $\mathcal{G}_{b}$ is illustrated in Figure \ref{Fig-Hospitals}(a).

\begin{figure}[t]
\begin{center}
\begin{minipage}[b]{0.245\textwidth}
\begin{center}
\subfigure[$\mathcal{G}_{b}$.]{%
  \includegraphics[width=0.925\textwidth]{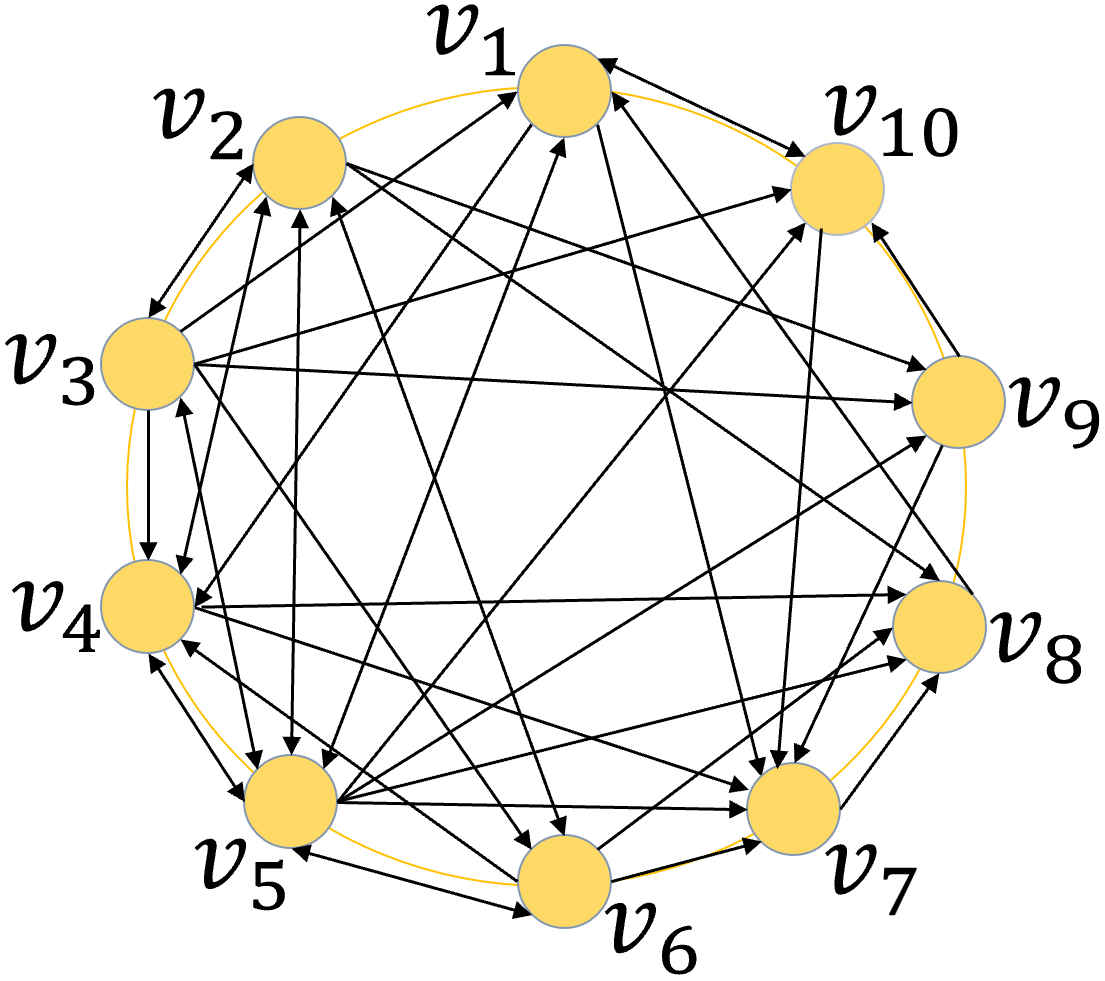}%
}%
\end{center}
\end{minipage}%
\begin{minipage}[b]{0.245\textwidth}
\begin{center}
\subfigure[$\mathcal{G}_{b}$.]{%
  \includegraphics[width=0.925\textwidth]{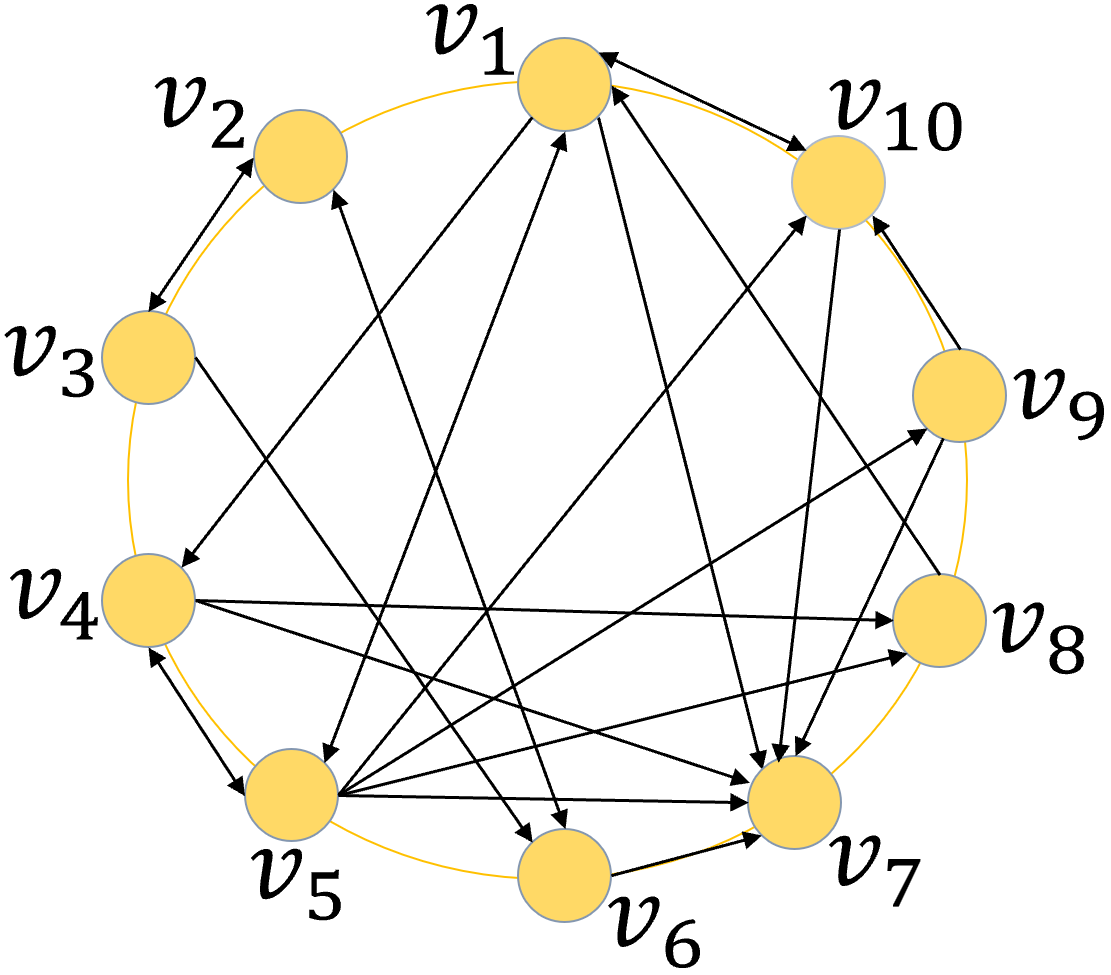}%
}%
\end{center}
\end{minipage}%
\end{center}
\caption{\textbf{Illustration of hospital collaboration.}}
\label{Fig-Hospitals}
\end{figure}

Let us consider the case where more than one large hospital may be located in the same city while small hospitals are dispersed in rural areas with lower population densities; competition mainly occurs among large hospitals. We assume that $v_{2}$ competes with $v_{5}$, while $v_{3}$ competes with $v_{4}$ and $v_{5}$, respectively. For the baselines except the local approach, the way of generating the clique cover is independent of $\mathcal{G}_{b}$ where FL-PTs in each clique collaborate together; the generated clique cover is $\{v_{4}, v_{5}\}$ and $\{v_{i}\}_{i=1}^{3}\cup\{v_{i}\}_{i=6}^{10}$. For \methodname, the generated data usage graph $\mathcal{G}_{u}$ is illustrated in Figure \ref{Fig-Hospitals}(b), which fully utilizes the information on $\mathcal{G}_{b}$ by Algorithm \ref{Greedy}. Compared with the baselines, it is observed from Figure \ref{Fig-Hospitals}(b) that the local model update information of $v_{4}$ and $v_{5}$ can also be utilized by other FL-PTs $\{v_{1}, v_{7}, v_{8}, v_{9}, v_{10}\}$ while $v_{4}$ and $v_{5}$ can similarly benefit from $v_{1}$ in the FL training process. This is an advantage of \methodname{} and is reflected in the experimental results, which are given in Table \ref{table-exp-hospitial}. Overall, \methodname{} achieves the best performance.

\section{Conclusions}

We consider in this paper an open research problem in which a subset of FL-PTs in the FL ecosystem engage in competition. 
We extend a principle from balance theory that ``the friend of my enemy is my enemy” to guarantee that no conflict of interest occurs among FL-PTs. The resulting FL ecosystem thus exhibits a
high level of scalability since FL-PTs that even compete can join smoothly. We formulate the problem and show that it is mathematically solvable in polynomial time. Thus, an efficient algorithm is proposed to determine the collaboration relationships of FL-PTs. 
The framework of this paper is also general since it considers both competition and data heterogeneity, which is another important aspect in FL. Extensive experiments demonstrate the effectiveness of the proposed framework. 


\section{Acknowledgments}
This research was supported in part by the National Key R\&D Program of China (No. 2022YFB2902900). This research/project is also supported, in part, by the National Research Foundation Singapore and DSO National Laboratories under the AI Singapore Programme (AISG Award No: AISG2-RP-2020-019); the RIE 2020 Advanced Manufacturing and Engineering (AME) Programmatic Fund (No. A20G8b0102), Singapore; and the Center for Frontier AI Research (CFAR), Agency for Science, Technology and Research (A$^{\ast}$STAR), Singapore. The work of Hao Cheng and Chongjun Wang was supported by the National Natural Science Foundation of China (Grant No. 62192783, 62376117). The work of Shanli Tan was done when he was a research intern with Xiaohu Wu at the National Engineering Research Center of Mobile Network Technologies, Beijing University of Posts and Telecommunications, China.  



\bibliography{aaai22}

\end{document}